\newtheorem{theorem}{Theorem}
\newtheorem{lemma}{Lemma}
\newcommand{\E}{\mathbb{E}}
\renewcommand{\eqref}[1]{Eq.~(\ref{#1})}
\newcommand{\lemref}[1]{Lemma~\ref{#1}}
\newcommand{\thmref}[1]{Thm.~\ref{#1}}
\newenvironment{algorithm}[1][\  ] %
{ \rm
\begin{tabbing}
....\=.....\=.....\=.....\=.....\=  \+ \kill
} %
{\end{tabbing} }
\title{A Variant of Azuma's Inequality for Martingales with Subgaussian Tails}
\author{Ohad Shamir\\Microsoft Research New England\\\texttt{ohadsh@microsoft.com}}
\date{}
\begin{document}

\maketitle

A sequence of random variables $Z_1,Z_2,\ldots$ is called a \emph{martingale
  difference sequence} with respect to another sequence of random variables
$X_1,X_2,\ldots$, if for any $t$, $Z_{t+1}$ is a function of $X_1,\ldots,X_t$,
and $\E[Z_{t+1}|X_1,\ldots,X_{t}]=0$ with probability $1$.

Azuma's inequality is a useful concentration bound for martingales. Here is one
possible formulation of it:
\begin{theorem}[Azuma's Inequality]
Let $Z_1,Z_2,\ldots$ be a martingale difference sequence with respect to
$X_1,X_2,\ldots$, and suppose there is a constant $b$ such that for any $t$,
\[
\Pr(|Z_t|\leq b)=1.
\]
Then for any positive integer $T$ and any $\delta>0$, it
holds with probability at least $1-\delta$ that
\[
\frac{1}{T}\sum_{t=1}^{T}Z_t \leq b\sqrt{\frac{2\log(1/\delta)}{T}}.
\]
\end{theorem}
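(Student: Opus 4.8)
The plan is to use the standard exponential-moment (Chernoff) method, exploiting the martingale structure through the tower rule. First I would fix a parameter $\lambda>0$ and an arbitrary threshold $\epsilon>0$, write $S_T=\sum_{t=1}^T Z_t$, and apply Markov's inequality to the nonnegative random variable $\exp(\lambda S_T)$:
\[
\Pr\left(S_T \geq \epsilon\right) \leq e^{-\lambda\epsilon}\,\E\left[\exp\left(\lambda S_T\right)\right].
\]
The whole argument then reduces to controlling the moment generating function on the right-hand side.

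The key step is to peel off one term at a time. Conditioning on $X_1,\ldots,X_{T-1}$ and using that $S_{T-1}$ is a function of these variables,
\[
\E\left[e^{\lambda S_T}\right] = \E\left[e^{\lambda S_{T-1}}\,\E\left[e^{\lambda Z_T}\mid X_1,\ldots,X_{T-1}\right]\right].
\]
Here I would invoke a conditional form of Hoeffding's lemma: since $|Z_T|\leq b$ almost surely and $\E[Z_T\mid X_1,\ldots,X_{T-1}]=0$, convexity of $z\mapsto e^{\lambda z}$ on $[-b,b]$ gives the pointwise bound $e^{\lambda Z_T}\leq \frac{b-Z_T}{2b}e^{-\lambda b}+\frac{b+Z_T}{2b}e^{\lambda b}$; taking the conditional expectation annihilates the linear term and leaves $\cosh(\lambda b)\leq e^{\lambda^2 b^2/2}$, the last inequality following by comparing Taylor series (using $(2k)!\geq 2^k k!$). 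Crucially this bound is deterministic, so $\E[e^{\lambda S_T}]\leq e^{\lambda^2 b^2/2}\,\E[e^{\lambda S_{T-1}}]$, and iterating $T$ times gives $\E[e^{\lambda S_T}]\leq e^{T\lambda^2 b^2/2}$.

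Plugging this back in, $\Pr(S_T\geq\epsilon)\leq \exp(-\lambda\epsilon+T\lambda^2 b^2/2)$. Optimizing the exponent over $\lambda$ with the choice $\lambda=\epsilon/(Tb^2)$ yields the sub-Gaussian tail $\Pr(S_T\geq\epsilon)\leq \exp(-\epsilon^2/(2Tb^2))$. Finally I would set the right-hand side equal to $\delta$, solve for $\epsilon = b\sqrt{2T\log(1/\delta)}$, divide by $T$, and restate the conclusion: with probability at least $1-\delta$, $\frac{1}{T}\sum_{t=1}^T Z_t \leq b\sqrt{2\log(1/\delta)/T}$.

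I expect the only delicate point to be the conditional Hoeffding estimate — specifically, ensuring the martingale-difference property is used so that the linear term vanishes and the resulting factor $e^{\lambda^2 b^2/2}$ is a constant rather than a random variable, which is exactly what makes the iteration across $t=1,\ldots,T$ valid. The remaining steps (Markov, the $\cosh$ comparison, and the one-variable optimization over $\lambda$) are routine.
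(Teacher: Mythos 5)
Your proof is correct: the Markov/Chernoff step, the conditional Hoeffding bound $\E[e^{\lambda Z_T}\mid X_1,\ldots,X_{T-1}]\leq \cosh(\lambda b)\leq e^{\lambda^2 b^2/2}$ (which is deterministic, so the peeling iterates cleanly), and the choice $\lambda=\epsilon/(Tb^2)$ all check out and yield exactly the stated bound. The paper states this classical theorem without proof, but your argument is essentially the same scheme the paper uses for its subgaussian variant (Theorem 2) --- the iterated-conditioning Chernoff method --- with Hoeffding's lemma playing the role that Lemma 1 plays there.
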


Sometimes, for the martingale we have at hand, $Z_t$ is not bounded, but
rather bounded with high probability. In particular, suppose we can show that
the probability of $Z_t$ being larger than $a$ (and smaller than $-a$), conditioned on any $X_1,\ldots,X_{t-1}$,
is on the order of $\exp(-\Omega(a^2))$. Random variables with this behavior are referred
to as having subgaussian tails (since their tails decay at least as fast as a
Gaussian random variable).

Intuitively, a variant of Azuma's inequality for these `almost-bounded'
martingales should still hold, and is probably known. However, we weren't able to find a convenient
reference for it, and the goal of this technical report is to formally provide such a result:

\begin{theorem}[Azuma's Inequality for Martingales with Subgaussian Tails]
\label{thm:azuma_subgaussian}
Let $Z_1,Z_2,\ldots,Z_T$ be a martingale difference sequence with respect to a
sequence $X_1,X_2,\ldots,X_{T}$, and suppose there are constants $b>1$, $c>0$
such that for any $t$ and any $a>0$, it holds that
\[
\max\left\{\Pr\left(Z_{t}>a\middle| X_1,\ldots,X_{t-1}\right), \Pr\left(Z_{t}<-a \middle| X_1,\ldots,X_{t-1}\right)\right\}
\leq b\exp(-ca^2).
\]
Then for any $\delta>0$, it holds with probability at least $1-\delta$ that
\footnote{It is quite likely that the numerical constant in the bound can be improved.}
\[
\frac{1}{T}\sum_{t=1}^{T}Z_t \leq \sqrt{\frac{28b\log(1/\delta)}{cT}}.
\]
\end{theorem}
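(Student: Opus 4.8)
The plan is to run the classical moment-generating-function (MGF) proof of Azuma's inequality, replacing the boundedness of $Z_t$ by a subgaussian bound on the conditional MGF. Write $\mathcal{F}_t=\sigma(X_1,\dots,X_t)$. The key step --- and the one I expect to carry all the difficulty --- is to show that there is a small absolute constant $K$ with
\[
\E\!\left[e^{\lambda Z_t}\,\middle|\,\mathcal{F}_{t-1}\right]\ \le\ \exp\!\left(\frac{Kb\lambda^2}{c}\right)\qquad\text{for every }\lambda>0 .
\]
To prove this, fix $\lambda>0$ and $t$. Since $\E[Z_t\mid\mathcal{F}_{t-1}]=0$, expand the exponential, and bound the conditional absolute moments by integrating the tail hypothesis:
\[
\E\!\left[|Z_t|^k\,\middle|\,\mathcal{F}_{t-1}\right]\ =\ k\!\int_0^\infty a^{k-1}\Pr(|Z_t|>a\mid\mathcal{F}_{t-1})\,da\ \le\ b\,k\,c^{-k/2}\,\Gamma(k/2).
\]
Re-summing the resulting series (splitting even and odd indices and using $\Gamma(j)/(2j-1)!\le 1/j!$ and $\Gamma(j+\tfrac12)=\sqrt{\pi}\,(2j)!/(4^jj!)$) gives a bound $\E[e^{\lambda Z_t}\mid\mathcal{F}_{t-1}]\le 1+b\,g(\lambda/\sqrt{c})$ for an explicit entire function $g$ with $g(\mu)=\mu^2+O(\mu^3)$ at the origin. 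It then remains to verify the elementary one-variable inequality $1+b\,g(\mu)\le\exp(Kb\mu^2)$ for all $\mu\ge 0$ and all $b>1$, which holds for a small $K$. (The interchange of expectation and sum, and the decay needed for re-summation, follow routinely from the subgaussian tails.)

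Given the MGF bound, the rest is standard. Since $\sum_{t\le T-1}Z_t$ is $\mathcal{F}_{T-1}$-measurable, the tower rule gives $\E[e^{\lambda\sum_{t\le T}Z_t}]=\E\big[e^{\lambda\sum_{t\le T-1}Z_t}\,\E[e^{\lambda Z_T}\mid\mathcal{F}_{T-1}]\big]\le e^{Kb\lambda^2/c}\,\E[e^{\lambda\sum_{t\le T-1}Z_t}]$, so by induction $\E[e^{\lambda\sum_{t\le T}Z_t}]\le\exp(Kb\lambda^2T/c)$. Markov's inequality applied to $e^{\lambda\sum_t Z_t}$ then yields, for every $\lambda>0$,
\[
\Pr\!\left(\frac{1}{T}\sum_{t=1}^{T}Z_t\ \ge\ \epsilon\right)\ \le\ \exp\!\left(-\lambda T\epsilon+\frac{Kb\lambda^2 T}{c}\right),
\]
and optimizing at $\lambda=c\epsilon/(2Kb)$ gives $\exp(-cT\epsilon^2/(4Kb))$. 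Setting this equal to $\delta$ and solving for $\epsilon$ gives $\epsilon=\sqrt{4Kb\log(1/\delta)/(cT)}$, which is exactly the claimed form provided $4K\le 28$; a careful accounting makes $K$ far smaller than $7$, consistent with the footnote.

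The main obstacle is entirely in the MGF bound: obtaining a \emph{quadratic}-in-$\lambda$ exponent valid for \emph{all} $\lambda>0$, rather than only for $\lambda$ in a bounded range. The obstruction is the nuisance constant $b>1$: because $b\exp(-ca^2)$ exceeds $1$ for small $a$, the variable $Z_t$ is not subgaussian in the usual normalized sense, and each conditional absolute moment picks up a spurious factor of $b$; one has to check that this factor, set against the super-exponential decay of $\Gamma(k/2)/k!$, still collapses into a clean $\exp(O(b\lambda^2/c))$ bound with no restriction on $\lambda$. If one settles instead for such a bound only on a bounded $\lambda$-interval, then for $T$ small relative to $\log(1/\delta)$ the Chernoff-optimal $\lambda$ leaves that interval, and one must supply a separate estimate there --- for instance the crude union bound $\Pr(\frac{1}{T}\sum_t Z_t>\epsilon)\le T\,b\exp(-c\epsilon^2)$, valid because $\sum_t Z_t>T\epsilon$ forces $\max_t Z_t>\epsilon$ --- and reconciling the constants from the two regimes is precisely what makes a non-optimal numerical constant like $28$ a natural endpoint.
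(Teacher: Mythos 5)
Your proposal is correct, and it follows the same overall skeleton as the paper: bound the conditional moment generating function of each increment by $\exp(O(b)\lambda^2/c)$ for \emph{all} $\lambda>0$, then apply the tower property, Markov's inequality, and optimize $\lambda$. Where you genuinely diverge is in the proof of the key MGF lemma. The paper conditions on slabs $\{j<sX\le j+1\}$, uses $e^a\le 1+a+a^2$ on the bulk and the tail bound on each slab, and then controls the resulting series $\sum_j e^{2j-cj^2/s^2}$ by a case analysis on $s\le\sqrt{c}/2$ versus $s>\sqrt{c}/2$, arriving at $\E[e^{sX}]\le e^{7bs^2/c}$. You instead bound every conditional absolute moment via $\E[|Z_t|^k\mid\mathcal{F}_{t-1}]\le b\,k\,c^{-k/2}\Gamma(k/2)$, re-sum the Taylor series (splitting even and odd $k$ with the stated Gamma identities gives $g(\mu)\le(e^{\mu^2}-1)+\sqrt{\pi}\,\mu(e^{\mu^2/4}-1)$), and reduce everything to the single inequality $1+b\,g(\mu)\le e^{Kb\mu^2}$. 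You leave that last inequality as ``elementary,'' which it is, but it does need a short two-regime verification: for $\mu\le 1$ one checks $g(\mu)\le 3\mu^2$ and uses $1+x\le e^x$, while for $\mu\ge 1$ one uses $1+b\,g(\mu)\le b(1+\sqrt{\pi})e^{\mu^2}$ and absorbs $\log b+\log(1+\sqrt{\pi})+\mu^2$ into $Kb\mu^2$ using $b\ge 1$, $\mu\ge 1$; this yields $K$ slightly above $3$, hence a final constant near $4K\approx 12$, which is both within the required $4K\le 28$ and strictly better than the paper's $28$ --- consistent with the paper's footnote that the constant is improvable. So your route avoids the paper's case split over $s$ at the price of a moment computation and one calculus-style inequality, and it buys a sharper constant; your closing discussion of a bounded-$\lambda$ fallback is moot, since your MGF bound holds for every $\lambda>0$.
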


\subsection*{Proof of \thmref{thm:azuma_subgaussian}}

We begin by proving the following lemma, which bounds the moment generating
function of subgaussian random variables.

\begin{lemma}\label{lem:momgenbound}
Let $X$ be a random variable with $\E[X]=0$, and suppose there exist a
constant $b\geq 1$ and a constant $c$ such that for all $t>0$, it holds that
\[
\max\{\Pr(X\geq t),\Pr(X\leq -t)\}\leq b\exp(-ca^2).
\]
Then for any $s>0$,
\[
\E[e^{sX}]\leq e^{7bs^2/c}.
\]
\end{lemma}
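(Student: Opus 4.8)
The plan is to derive the bound on $\E[e^{sX}]$ from the tail hypothesis in three moves: a pointwise truncation of the exponential that isolates the mean-zero cancellation, a reduction of the leftover term to an exponential-of-square moment, and a short calculus estimate to fit everything under the exponent $7$. The reason a direct attack is too weak --- e.g.\ writing $\E[e^{sX}]=\int_0^\infty\Pr(e^{sX}\ge u)\,du$ and plugging in the tail bound --- is that it never uses $\E[X]=0$, so it produces a contribution linear in $s$ near $s=0$, whereas the target $e^{7bs^2/c}$ is $1+O(s^2)$. Capturing this cancellation is the one genuinely necessary idea.

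First I would use the elementary pointwise inequality $e^y\le 1+y+y^2 e^{|y|}$, valid for all real $y$: for $y\ge 0$ it follows term by term from $k!\ge (k-2)!$ (for $k\ge 2$), and for $y<0$ it reduces to the $y\ge 0$ case using $\sinh z\ge z$. Taking expectations with $y=sX$ and invoking $\E[X]=0$ gives $\E[e^{sX}]\le 1+s^2\,\E\!\left[X^2 e^{s|X|}\right]$, so the whole problem reduces to bounding $\E[X^2 e^{s|X|}]$. Finiteness here is automatic: the hypothesis forces $e^{-cX^2}$-order tail decay, which dominates $e^{s|X|}$.

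Next I would control $\E[X^2 e^{s|X|}]$ by combining three one-line facts: the extremal bound $X^2\le\frac{4}{ce}\,e^{cX^2/4}$ (maximize $t\mapsto t e^{-ct/4}$ over $t\ge 0$ and set $t=X^2$); the AM--GM bound $s|X|\le\frac{c}{4}X^2+\frac{s^2}{c}$; and, for $0<\gamma<c$, the estimate $\E[e^{\gamma X^2}]\le 1+\frac{2b\gamma}{c-\gamma}$, obtained by writing $\E[e^{\gamma X^2}]=1+\int_1^\infty\Pr\!\big(|X|>\sqrt{\ln u/\gamma}\big)\,du$ and inserting $\Pr(|X|\ge t)\le 2b e^{-ct^2}$ (the two-sided form of the hypothesis). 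Multiplying the first two bounds gives $X^2 e^{s|X|}\le\frac{4}{ce}\,e^{s^2/c}\,e^{cX^2/2}$; taking expectations with $\gamma=c/2$ and using $b\ge 1$ (so $1+2b\le 3b$) yields $\E[X^2 e^{s|X|}]\le\frac{12b}{ce}\,e^{s^2/c}$, hence $\E[e^{sX}]\le 1+\frac{12bs^2}{ce}\,e^{s^2/c}$.

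Finally I would close the gap by checking $1+\frac{12bs^2}{ce}\,e^{s^2/c}\le e^{7bs^2/c}$. Putting $w=bs^2/c\ge 0$ and using $b\ge 1$ to replace $s^2/c$ by $w$ in the exponent, it suffices to prove $1+\frac{12w}{e}e^{w}\le e^{7w}$ for $w\ge 0$; both sides equal $1$ at $w=0$, so it is enough that the derivative inequality $7e^{6w+1}\ge 12(1+w)$ hold for all $w\ge 0$, which is immediate from $e^{6w}\ge 1+6w$ together with $\frac{12}{7e}<1$. The only real obstacle in the argument is quantitative bookkeeping: the constants in the splits of the third step (the $c/4$ in the extremal and AM--GM bounds, and $\gamma=c/2$ in the square-moment bound) must be arranged so the accumulated factors still sit below the exponent $7$ --- no individual step is deep. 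A parallel but more computational route is to expand $\E[e^{sX}]=1+\sum_{k\ge 2}s^k\E[X^k]/k!$, bound $\E[|X|^k]\le bk\,c^{-k/2}\Gamma(k/2)$ by integrating the tail, and sum the resulting series (splitting even and odd $k$) against that of an exponential; I would prefer the shorter route above.
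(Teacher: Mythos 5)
Your proof is correct, but it follows a genuinely different route from the one in the paper. The paper keeps the truncation local: it conditions on the event $sX\le 1$ (where $e^a\le 1+a+a^2$ applies and $\E[X]=0$, $\E[X^2]\le 2b/c$ control the main term) and decomposes the upper tail into slabs $j<sX\le j+1$, which produces the series $\sum_{j\ge1}e^{2j-cj^2/s^2}$; this series is then bounded by a case analysis on $s$ (geometric summation for $s\le\sqrt{c}/2$, a peak-plus-tail estimate for $s>\sqrt{c}/2$), and the two regimes are merged into the exponent $7$. You instead use the global pointwise inequality $e^y\le 1+y+y^2e^{|y|}$, so the mean-zero cancellation is extracted once and for all and the problem becomes bounding $\E[X^2e^{s|X|}]$; you then route the subgaussian hypothesis through the exponential square moment $\E[e^{cX^2/2}]\le 1+2b$ (via the layer-cake integral), together with $X^2\le\tfrac{4}{ce}e^{cX^2/4}$ and the AM--GM split $s|X|\le\tfrac{c}{4}X^2+\tfrac{s^2}{c}$, and close with a single derivative comparison $7e^{6w+1}\ge 12(1+w)$. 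Your version buys uniformity: no case split on $s$, no discretized slab sums, and the subgaussianity enters through the standard modern device $\E[e^{\gamma X^2}]$, with $b\ge1$ used exactly where the paper uses it (absorbing additive constants and upgrading $s^2/c$ to $bs^2/c$ in the exponent); in fact your intermediate bound $1+\tfrac{12}{e}we^{w}$ would fit under a somewhat smaller exponent than $7$, consistent with the paper's footnote that the constant is not optimized. The paper's version is more elementary in the tools it invokes (only tail probabilities and the second moment, no auxiliary exponential-moment lemma), at the cost of the two-regime bookkeeping. All of your individual steps check out, including the $y<0$ case of the pointwise inequality via $\sinh z\ge z$ and the finiteness of $\E[X^2e^{s|X|}]$, so there is no gap.
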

\begin{proof}
We begin by noting that
\[
\E[X^2]=\int_{t=0}^{\infty}\Pr(X^2\geq t)dt\leq \int_{t=0}^{\infty}\Pr(X\geq
\sqrt{t})dt+\int_{t=0}^{\infty}\Pr(X\leq -\sqrt{t})dt \leq
2b\int_{t=0}^{\infty}\exp(-ct)dt = \frac{2b}{c}
\]

Using this, the fact that $\E[X]=0$, and the fact that $e^{a}\leq 1+a+a^2$ for
all $a\leq 1$, we have that
\begin{align}
&\E[e^{sX}]=\E\left[e^{sX}\middle| X\leq \frac{1}{s}\right]\Pr\left(X\leq
\frac{1}{s}\right)+\sum_{j=1}^{\infty}\E\left[e^{sX}\middle|j<sX\leq
j+1\right]\Pr\left(j<sX\leq j+1\right)\notag\\
&\leq \E\left[1+sX+s^2 X^2\middle| sX\leq 1\right]\Pr\left(sX\leq
1\right)+\sum_{j=1}^{\infty}e^{j+1}\Pr\left(X>\frac{j}{s}\right)\notag\\
&\leq
\left(1+\frac{2bs^2}{c}\right)+b\sum_{j=1}^{\infty}e^{2j-cj^2/s^2}.\label{eq:series}
\end{align}
We now need to bound the series $\sum_{j=1}^{\infty}e^{j(2-cj/s^2)}$. If
$s\leq \sqrt{c}/2$, we have
\[
2-\frac{cj}{s^2}\leq -\frac{c}{2s^2}\leq -2
\]
for all $j$. Therefore, the series can be upper bounded by the convergent
geometric series
\[
\sum_{j=1}^{\infty}\left(e^{-c/(2s^2)}\right)^j=
\frac{e^{-c/(2s^2)}}{1-e^{-c/(2s^2)}}< 2e^{-c/(2s^2)}\leq 4s^2/c,
\]
where we used the upper bound $e^{-c/(2s^2)}\leq e^{-2}<1/2$ in the second
transition, and the last transition is by the inequality $e^{-x}\leq
\frac{1}{x}$ for all $x>0$. Overall, we get that if $s\leq \sqrt{c}/2$, then
\begin{equation}\label{eq:s_small}
\E[e^{sX}]\leq 1+\frac{2bs^2}{c}+b\frac{4s^2}{c}\leq e^{6bs^2/c}.
\end{equation}

We will now deal with the case $s>\sqrt{c}/2$. For all $j>3s^2/c$, we have
$2-jc/s^2<-1$, so the tail of the series satisfies
\[
\sum_{j>3s^2/c}e^{j(2-jc/s^2)}\leq \sum_{j=0}^{\infty}e^{-j}<2<\frac{8s^2}{c}.
\]
Moreover, the function $j\mapsto j(2-jc/s^2)$ is maximized at $j=s^2/c$, and
therefore $e^{j(2-jc/s^2)}\leq e^{s^2/c}$ for all $j$. Therefore, the initial
part of the series is at most
\[
\sum_{j=1}^{\lfloor 3s^2/c \rfloor}e^{j(2-jc/s^2)}\leq
\frac{3s^2}{c}e^{s^2/c}\leq e^{s^2/ec}e^{s^2/c}\leq e^{(1+1/e)s^2/c},
\]
where the second to last transition is from the fact that $a\leq e^{a/e}$ for all $a$.

Overall, we get that if $s>\sqrt{c}/2$, then
\begin{equation}\label{eq:s_big}
\E[e^{sX}]\leq 1+\frac{10bs^2}{c}+be^{(1+1/e)s^2/c} \leq e^{7bs^2/c},
\end{equation}
where the last transition follows from the easily verified fact that
$1+10ba+e^{(1+1/e)ba}\leq e^{7ba}$ for any $a\geq1/4$, and indeed $bs^2/c\geq
1/4$ by the assumption on $s$ and the assumption that $b\geq 1$. Combining
\eqref{eq:s_small} and \eqref{eq:s_big} to handle the different cases of $s$,
the result follows.
\end{proof}

After proving the lemma, we turn to the proof of \thmref{thm:azuma_subgaussian}.

\begin{proof}[Proof of \thmref{thm:azuma_subgaussian}]
We proceed by the standard Chernoff method. Using Markov's inequality and
\lemref{lem:momgenbound}, we have for any $s>0$ that
\begin{align*}
&\Pr\left(\frac{1}{T}\sum_{t=1}^{T}Z_t>\epsilon\right)
~=~ \Pr\left(e^{\sum_{t=1}^{T}Z_t}>e^{sT\epsilon}\right)
~\leq~ e^{-sT\epsilon}\E\left[e^{s\sum_t Z_t}\right]\\
& = e^{-sT\epsilon}\E\left[\E\left[\prod_{t=1}^{T}e^{sZ_t}\middle|X_1,\ldots,X_T\right]\right]
~=~ e^{-sT\epsilon}\E\left[\E\left[e^{sZ_{T}}\prod_{t=1}^{T-1}e^{sZ_t}\middle|X_1,\ldots,X_{T-1}\right]\right]\\
& = e^{-sT\epsilon}\E\left[\E\left[e^{sZ_T}\middle|X_1,\ldots,X_{T-1}\right]\E\left[\prod_{t=1}^{T-1}e^{sZ_t}\middle|X_1,\ldots,X_{T-1}\right]\right]
~\leq~ e^{-sT\epsilon}e^{7bs^2/c}\E\left[\prod_{t=1}^{T-1}e^{sZ_t}\middle|X_1,\ldots,X_{T-1}\right]\\
&\ldots \leq e^{-sT\epsilon+7Tbs^2/c}.
\end{align*}
Choosing $s=c\epsilon/14b$, the expression above equals
$e^{-cT\epsilon^2/28}$, and we get that
\[
\Pr\left(\frac{1}{T}\sum_{t=1}^{T}Z_t>\epsilon\right)\leq
e^{-cT\epsilon^2/28b},
\]
setting the r.h.s. to $\delta$ and solving for $\epsilon$, the theorem follows.
\end{proof}

\subsection*{Acknowledgements}

We thank S\'{e}bastien Bubeck for pointing out a bug in a previous version of this manuscript.

\end{document}